\newcommand{\eg}{{e.g.}}
\newcommand{\ie}{{i.e.}}
\renewcommand{\G}{\mathcal{G}}
\newcommand{\red}{\prec}
\newcommand{\pattern}{CQC pattern\xspace}
\renewcommand{\patterns}{CQC patterns\xspace}
\newcommand{\software}{MiSPa\xspace}
\theoremstyle{plain}
\newtheorem{dft}{Definition}[section]
\begin{document}
\title{Mining Contrasting Quasi-Clique Patterns}

\author{Roberto Alonso }
\affiliation{%
  \institution{Technical University of Munich}
}
\email{alonsor@in.tum.de}

\author{Stephan G\"unnemann}
\affiliation{%
  \institution{Technical University of Munich}
}
\email{guennemann@in.tum.de}


\begin{abstract}
Mining dense quasi-cliques is a well-known clustering task with applications ranging from social networks over collaboration graphs to document analysis. Recent work has extended this task to multiple graphs; i.e.~the goal is to find groups of vertices highly \emph{dense} among multiple graphs. In this paper we argue that in a multi-graph scenario the \emph{sparsity} is valuable for knowledge extraction as well.
We introduce the concept of {contrasting quasi-clique patterns}: a collection of vertices highly dense in one graph but highly sparse (i.e.~less connected) in a second graph. Thus, these patterns specifically highlight the difference/contrast between the considered graphs.
Based on our  novel model, we propose an algorithm that enables fast computation of contrasting patterns by exploiting intelligent traversal and pruning techniques.
We showcase the potential of contrasting patterns on a variety of synthetic and real-world datasets.
\end{abstract}
%

%
%

 \begin{CCSXML}
<ccs2012>
<concept>
<concept_id>10002950.10003624.10003633</concept_id>
<concept_desc>Mathematics of computing~Graph theory</concept_desc>
<concept_significance>500</concept_significance>
</concept>
<concept>
<concept_id>10002950.10003624.10003633.10010917</concept_id>
<concept_desc>Mathematics of computing~Graph algorithms</concept_desc>
<concept_significance>300</concept_significance>
</concept>
<concept>
<concept_id>10002950.10003624.10003633.10003641</concept_id>
<concept_desc>Mathematics of computing~Graph enumeration</concept_desc>
<concept_significance>100</concept_significance>
</concept>
<concept>
<concept_id>10002951.10003227.10003351.10003444</concept_id>
<concept_desc>Information systems~Clustering</concept_desc>
<concept_significance>500</concept_significance>
</concept>
</ccs2012>
\end{CCSXML}

\ccsdesc[500]{Mathematics of computing~Graph theory}
\ccsdesc[300]{Mathematics of computing~Graph algorithms}
\ccsdesc[100]{Mathematics of computing~Graph enumeration}
\ccsdesc[500]{Information systems~Clustering}

\keywords{quasi-cliques, contrasting patterns, graph mining}

\maketitle

\section{Introduction}

The World Wide Web, social networks, and e-commerce platforms have been successfully studied using various graph mining methods. One of most prominent tasks is mining quasi-cliques, i.e. detecting sets of vertices that are \emph{densely} connected in a graph. 

While traditionally only a single graph was considered, recent research  \cite{pjz05, crossgraph3, zwzk06, mimag} has extended the search for quasi-cliques to the setting where multiple graphs are given (also known as multi-layer/multi-dimensional graphs). Often these graphs represent different types of connections (collaborations, similarity, citations, etc.). Here, the goal is to find groups of vertices that are densely connected among multiple graphs at the same time (cross-graph quasi-cliques); which has proven useful in several applications, most prominently in community detection (see \eg~\cite{Kim15}). While all existing work in the area of cross-graph quasi-cliques focused on detecting \emph{dense} areas only, in this work we argue that for the multi-graph scenario \emph{sparsity} is valuable as well. 

Consider, for example, a graph of products in an e-commerce system where on one hand connections indicate co-purchases (graph~1), and on the other their textual similarity w.r.t their description (graph 2). Products highly connected in the co-purchasing graph but textually dissimilar are \textit{complementary} products; \eg~socks and shoes are purchased together but have different description (see Fig.~\ref{fig:model} for an illustrative example). In contrast, similar products not being purchased together are \textit{substitute} products; \eg~the description of shoes is similar, but often only one model is purchased. Finding complementary and substitute products is useful to make meaningful product suggestions and improve the overall shopping experience~\cite{McAuley15}.

As another example, consider a graph of documents (\eg~scientific papers, websites, etc) where one graph (graph 1) represents citations between documents\footnote{Document A cites B or B cites A;~\ie~an undirected graph}, and on the other their textual similarity (graph 2). Intuitively, a group of similar documents should be highly connected in the citation graph as well; \eg~papers about quasi-cliques are textually similar and often have a comparable state of the art. Thus, finding a group that is densely connected regarding their content, but \emph{sparsely} connected regarding their citations is useful to detect, \eg, plagiarism or to identify missing connections between research works. We will show some real-world examples of these patterns in Sec.~\ref{exp:case}.

\begin{figure}[t]\centering
	\includegraphics[scale = 0.34]{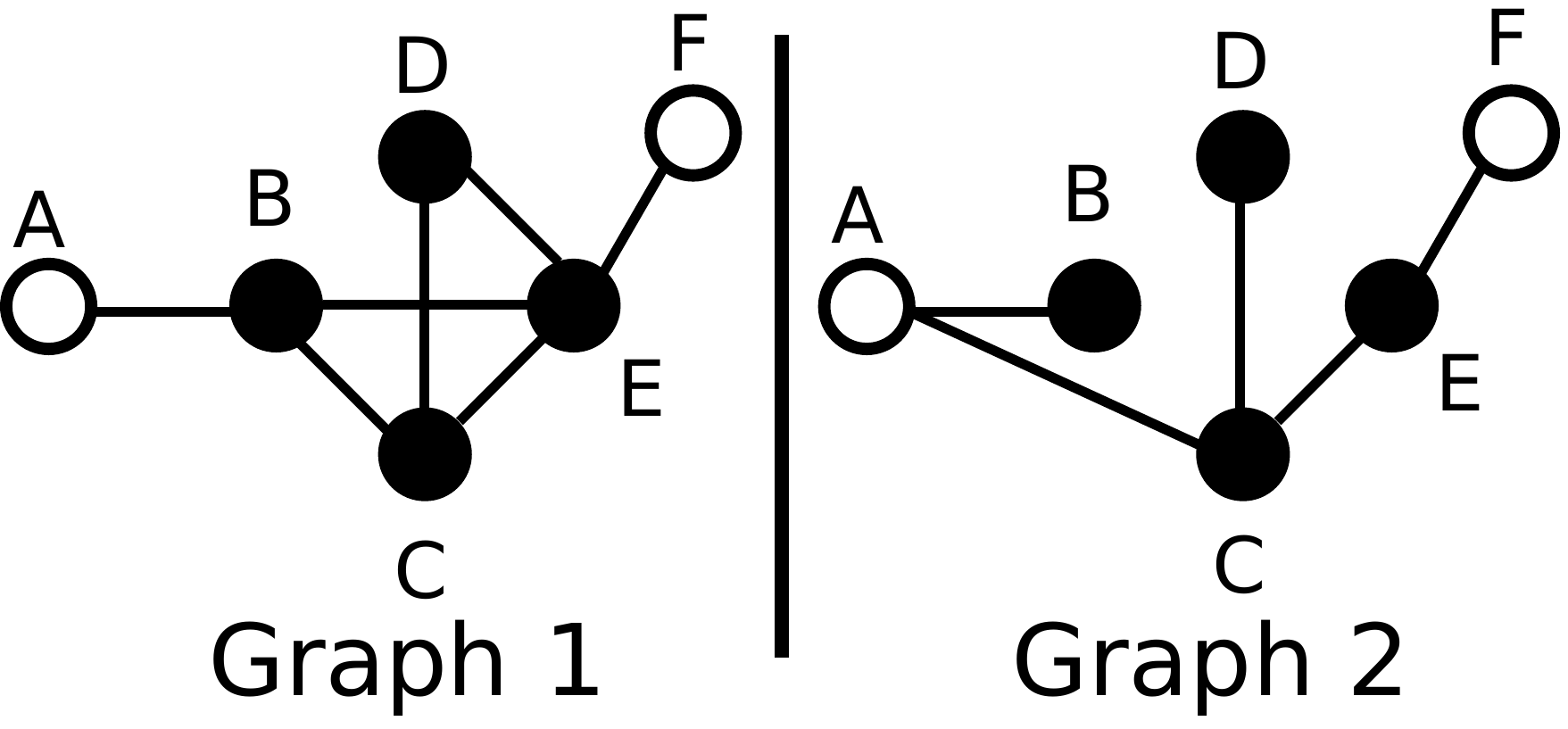}
	\vspace*{-1mm}\caption{Contrasting quasi-clique pattern: The vertex set $\{B,C,D,E\}$ is dense in graph~1 (e.g. co-purchasing graph) but sparse in graph 2 (e.g. similarity graph).}
	\label{fig:model}
	\vspace*{-2mm}
\end{figure}

Now, consider a group of people, one graph (graph 1) represents their day-to-day contact\footnote{E.g.~given by sensors, surveys or GPS coordinates} and the other (graph 2) their friendship on Facebook. People with whom we have a lot of interaction (\eg~friends or family) are expected to be Facebook contacts;~\ie~dense in both graphs. However, people highly connected in the Facebook graph and at the same time sparsely connected in the contact graph is an interesting pattern to give insights for a social or psychological study, while people highly connected in the contact graph with few connections in the Facebook graph might represent a different type of relationship;~\eg~in principle, teachers, bosses, and supervisors are not expected to be Facebook friends. Later in Sec.~\ref{exp:case} we show a real case study showing interactions between High School students as given by~\cite{Mastrandrea16}.



As these examples indicate, in a multi-graph scenario sparsity becomes an important characteristic. 
Based on this motivation, this paper introduces the concept of \emph{contrasting quasi-clique patterns} (\patterns): a collection of vertices highly dense in one graph, and simultaneously sparse in a second graph.\footnote{It is important to note that finding 'sparse' quasi-cliques in a \emph{single} graph, is (usually) not a reasonable task: any set of disconnected vertices forms a group with density of zero. Due to the general sparsity of graph data, however, this non-connectedness is not very surprising in general. However, in the case of \emph{multiple} graphs, sparsely connected vertices become interesting -- if their density in the second graph is much higher. Thus, the \emph{difference} in the density values is important.}

In our model, we consider the fact that a vertex can naturally belong to more than one pattern. Thus, we allow our patterns to overlap. 
However, allowing overlap often leads to the problem of redundancy since many similar patterns might be generated \cite{rescu,GFB+10}. 
Thus, based on our novel pattern definition, we introduce a model for generating a set of patterns that are non-redundant, though, still allows the patterns to overlap to a certain extend. This final set of patterns should contain the \emph{most contrasting patterns}.

Finally, since determining the overall clustering according to our model is NP-hard, 
we introduce an approximate algorithm to compute it. 
 The idea is to use a best-first principle to explore a joint enumeration tree in an informed fashion; thus, analyzing both graphs simultaneously and starting to enumerate the most contrasting patterns first. 
Our experimental analysis shows that this specialized approach is much more efficient than relying on existing \emph{dense} cross-graph quasi-clique detection methods.\footnote{By applying these techniques on one graph's complement one could, in principle, detect sparse patterns. This approach, however, does not scale to large graphs.}

 
\noindent The contributions of this work are as follows: \vspace*{-0.5mm}
\setlength{\leftmargini}{14pt}
\begin{itemize}\setlength{\itemsep}{0pt}
\item We introduce the problem of finding contrasting quasi-cliques patterns (\patterns). 
\item We propose the algorithm \software to approximate the problem of finding \patterns.
\item We perform experiments on several real-world datasets confirming the potential of our novel pattern definition.

\end{itemize}
We want to highlight that our task is different to discriminative subgraph discovery and to community detection in signed networks (see.  Sec.~\ref{sec:relatedWork}).

\textbf{Overview.} Sec.~\ref{sec:relatedWork} presents related work. Sec.~\ref{sec:patternModel} introduces the \pattern model, followed by our \software algorithm in Sec.~\ref{sec:algorithm}. Sec.~\ref{sec:results} shows our experiments, and Sec.~\ref{sec:conclusions} concludes the~paper. 

 \section{Related Work}
\label{sec:relatedWork}

Multiple mining tasks for graphs exist \cite{aggarwal2010managing}. Relevant to this work is the field of  ``dense subgraph mining''. Here, different models and notions have been introduced, with cliques and 
$\gamma$-quasi-cliques \cite{pjz05,DBLP:conf/pkdd/LiuW08} being the most prominent ones.

Several clustering approaches have been proposed which take a set of networks as input, where each network represents a particular kind of relationship. 
This type of data if often called \emph{multi-relational network} \cite{cai2005community}.
The work \cite{DBLP:conf/ismis/CerfNB09}, for example, aims to find cliques in subsequent time steps in a dynamic graph represented as a 3-dimensional boolean cube. By basing on the algorithm of \cite{DBLP:journals/tkdd/CerfBRB09}, the desired cliques are specified by (anti-)monotonic constraints. 
Since it exploits the (anti-)monotonic property of cliques, 
an extension to quasi-cliques is not possible as the quasi-clique model does not fulfill a monotonicity property. 

In \cite{pjz05}, the principle of cross-graph quasi-cliques detection has been introduced.
Given a database of graphs each having the same vertices, a cross-graph quasi-clique is defined as a set of vertices that forms a quasi-clique in \emph{all} of the graphs. Only maximal sets having this property are output. 
Similarly, the approaches \cite{crossgraph3} and \cite{zwzk06}  mine sets of vertices that form a clique \cite{crossgraph3} or quasi-clique \cite{zwzk06} in at least a certain percentage of the graphs in the database.
Both approaches aim at mining \emph{closed} (quasi-)cliques. Last, in \cite{mimag} a method for finding non-redundant quasi-cliques spanning across multiple graphs has been proposed. 

All of the above works focused on finding dense quasi-cliques only. In contrast, our work aims at finding contrasting patterns: groups of vertices that are dense and sparse at the same time.

As a naive approach to detect contrasting patterns one could
consider to use one of the above dense subgraph mining algorithms
when operating on the complement graphs. We compare with such
an approach in our experimental study.

Finally, please note that the task of community detection in signed networks \cite{anchuri2012communities,esmailian2015community}, i.e.\ networks with positive and negative edges, is different to our principle. There, one is still interested in finding dense clusters, with edges of different kind. Negative edges, however, do not mean sparsity -- which is the focus of this work. Indeed, signed networks and our principle could be combined. 

Also note the difference to the principle of discriminative subgraph mining \cite{thoma2010discriminative,jin2011lts,ting2006mining} (rarely also called contrast subgraphs as in \cite{ting2006mining}), a supervised-learning task. Here a database of multiple (usually attributed) graphs is given, each belonging to a certain class we aim to predict. The goal is to find subgraphs that discriminate between the classes, i.e. subgraphs appearing in many graphs of one class but only few graphs of the other classes. Whether these subgraphs are dense or sparse is not relevant.

\section{Pattern Model}
\label{sec:patternModel}


This section formalizes the problem of finding contrasting quasi-clique patterns (\patterns); a collection of vertices dense in one graph but sparsely connected in the other. To this aim we appeal to the notion of quasi-cliques, a set of vertices which is almost completely connected. 

There have been multiple works on quasi-cliques which rely primarily on two formulations. We will show that choosing a single quasi-clique formulation to detect \patterns has severe drawbacks. Thus, we combine both definitions in a single measure of interestingness.

The input under consideration is a set of graphs $\G = \{G_1,G_2\}$ where each graph $G_i = (V,E_i), E_i \subseteq V \times V$ is an undirected graph without self-loops. Both graphs share the same vertices but with different edges. Note that it is possible to consider different vertex sets $\{V_1,V_2\}$ by simply using $V = \bigcup V_i$. 


\subsection{Existing quasi-clique models and their limitations to detect \patterns}
\label{sec:quasicliques}

Before specifying our new model, we first introduce the existing quasi-clique models and highlight their limitations for \patterns detection. The first definition of quasi-cliques as, e.g., given by \cite{mimag,DBLP:conf/pkdd/LiuW08,zwzk06} is:

\begin{dft}[$\delta$-Quasi-clique]\label{def:clique}~\\Given a vertex set $O \subseteq V$ in a graph $G_i = (V,E_i)$, and $\delta \in (0,1]$, $O$ is a $\delta$-quasi-clique if\\[-2mm] $$\forall v \in O: deg^O_{G_i}(v) \geq \lceil \delta \cdot (|O| -1) \rceil$$ where $deg^O_{G_i}(v) = |\{u \in O \mid (u,v) \in E_i\}|$.
The density  of a quasi-clique $O$ in graph $G_i$ is defined by\\[-2mm] $$\textstyle \gamma_{G_i}(O) = \frac{\min_{v \in O}\{deg^O_{G_i}(v)\}}{|O|-1}$$
\end{dft}

The higher $\delta$, the higher the required density of the quasi-clique. For $\delta=1$, the definition represents cliques. 

The second definition of quasi-cliques\footnote{With respect to this definition sometimes called pseudo-cliques or cohesive patterns.}, inspired by works like~\cite{Uno2010, copam}, is defined as follows:

\vspace*{1mm}\begin{dft}[$\theta$-Quasi-clique]~\\Given a \emph{connected} vertex set $O \subseteq V$ in a graph $G_i = (V,E_i)$, and $\theta \in (0,1]$, $O$ is a $\theta$-quasi-clique if$$\alpha_{G_i} (O) \geq \theta$$ 
where $\alpha_{G_i} (O) = 2|E_i(O)|/(|O|\cdot(|O|-1))$ is the proportion of observed edges $E_i(O)$ in $O$ w.r.t graph $G_i$ compared to the potential edges in a (complete) clique, called simply density.

\end{dft}

The crucial difference between both definitions is that in a $\delta$-quasi-clique \emph{each} vertex has to be sufficiently connected, while in a $\theta$-quasi-clique only the average density is important. This seemingly small difference has a severe effect when trying to find dense or sparse regions in a graph.

1) When considering dense regions, the $\delta$-quasi-clique definition favors the desirable detection of ``tightly and relatively evenly'' connected quasi-cliques as shown by \cite{zwzk06} -- i.e. the density is relatively homogeneous in the pattern. In graph 2 of Figure~\ref{fig:example_pattern}, the set $O=\{A,B,C,D\}$ is a valid $\delta$-quasi-clique, while the set $O'=\{A,B,C,D,E\}$ is none. This property matches the intuition, since the node $E$ has only one edge, making the pattern unevenly connected and not helpful for the density of the group.

In contrast,  using $\theta$-quasi-clique the set $O'$ would still be regarded as relatively dense ($\alpha= 7/10$). Thus, $\delta$-quasi-cliques are better suited for finding homogeneous dense regions.

2) When considering sparse regions, the roles switch: Since the definition of $\delta$-quasi-cliques enforces \emph{each} vertex to be sufficiently connected, \emph{a single} less-connected vertex would lower the density significantly. In Figure~\ref{fig:example_pattern}, the density of the vertex set $O'=\{A,B,C,D,E\}$ in graph 2 is $\gamma = 0.25$, \ie~seemingly sparse -- while obviously the region is not entirely sparse.  By contrast, the $\theta$-quasi-clique definition does not suffer from this drawback.



While clearly each definition of density and sparsity has its own characteristics, we argue that the resulting patterns should be relatively homogeneous regarding their density and sparsity. That is, the density/sparsity should not just be the effect of individual nodes. 

Accordingly,  since our goal is to find highly dense vertices in one graph which are sparsely connected in the other, we combine the rationale behind $\delta$ and $\theta$ quasi-cliques. While $\delta$ is more suited to capture density, $\theta$ is better to capture less connected vertices.

\begin{figure}[t]
	\includegraphics[scale = 0.44]{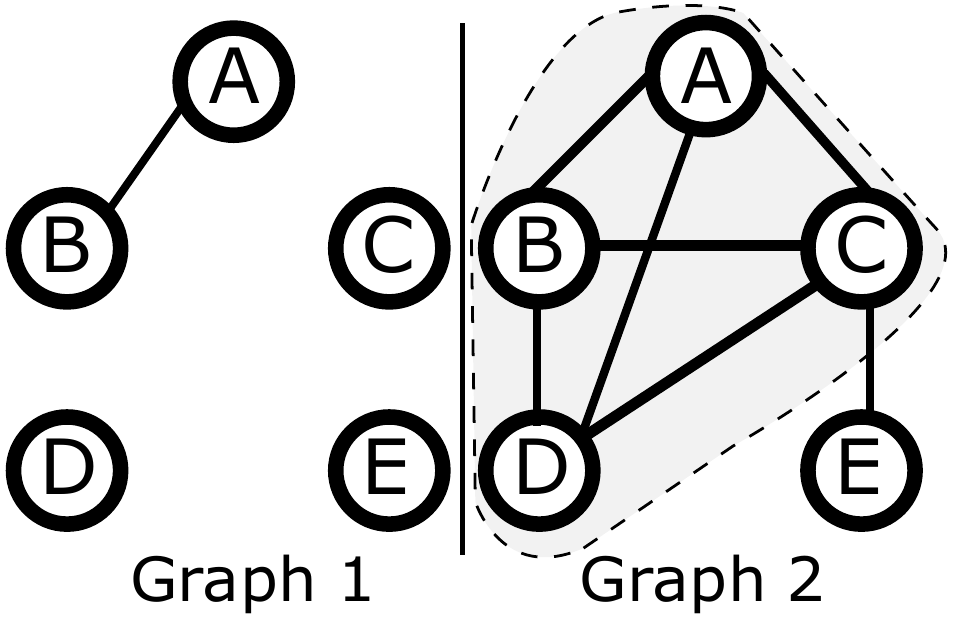}
	\vspace*{-1mm}\caption{Example to illustrate our \pattern model}
	\label{fig:example_pattern}
\end{figure}



\subsection{\patterns}

More precise: First, we ensure that the nodes in one graph are sufficiently connected using the $\delta$-quasi-clique model. Second, to define sparsity -- or more precise our measure of contrast -- we borrow from the $\theta$-quasi-clique definition by referring to the density measure $\alpha$;~\ie~the fraction of observed edges compared to the potential edges:  If the fraction is high in one graph but low in the other, we observe a significant difference in the number of edges. Thus, the contrast of their density values is high. 

Formally:
\vspace*{1mm}\begin{dft}[$(\delta,\delta')$-contrasting quasi-clique]~\\
	Given a vertex set $O \subseteq V$, graphs $\G = \{G_1,G_2\}$,  $\delta \in (0,1]$, and $\delta'\in (0,1]$.
	$O$ is a $(\delta,\delta')$-\pattern if \\[-4mm]
	\begin{itemize}
			\item $\max\{{\gamma}_{G_1}(O),{\gamma}_{G_2}(O)\}\geq \delta$~~~// i.e.  $O$ is a $\delta$-quasi-clique in at least one graph
		\item $c(O) > \delta'$~~~// i.e. the contrast in densities is high
	\end{itemize}
where $c(O):= |\alpha_{G_1}(O) - \alpha_{G_2}(O)|$ is the contrast of the pattern.

\end{dft}

To illustrate better \patterns consider Fig~\ref{fig:example_pattern}, and the set $O=\{A,B,C,D\}$. In the second graph, $O$ is a quasi-clique of high density ($\gamma = 1$); in contrast, in the first graph, the density is zero since $C$ and $D$ are disconnected. The contrast value is given by $6/6-1/6=(6-1)/(0.5\cdot 4 \cdot 3)=0.83$. Thus, $O$ is a $(1, 0.83)$-\pattern. The set $O'=\{A,B,C,D,E\}$ is not a \pattern; $O'$ is not a $\delta$ quasi-clique in any of the graphs. 

\subsection{Selection of $\delta$ and $\delta'$} 

The \patterns can be controlled by selecting $\delta$ and $\delta'$. A low value of $\delta$ produces sparse patterns while a large value dense ones. Also, if we set $\delta'$ to a large value we get patterns with more difference in the number of edges -- or more precise a high contrast -- while a small $\delta'$ favors the detection of patterns with few contrast. Thus, by combining both parameters into a single definition (\ie~\patterns), we should consider the influence that both have in the final result. As an example, a small $\delta$ and a large $\delta'$ (\ie~sparse vertices with high contrast) might produce no patterns at all wasting computational resources. By contrast, if we set $\delta$ to a large value and $\delta'$ to a small one we get highly dense patterns with contrast. 

Further, notice that a \pattern pattern has to be, first, a potential quasi-clique in at least one of the graphs before considering it a contrasting pattern. Thus, $\delta$ heavily influences the number of potential patterns to investigate while $\delta'$ refines the enumeration of these by means of an upper bound on the interestingness of the patterns as we shall show below (in Section~\ref{sec:quality_estimation}).

In general, we suggest to fix $\delta' = 0$ (\ie~we will be interested in all patterns with contrast) and set $\delta \geq 0.5$, since in this case the nodes are tightly connected and guaranteed to be connected \cite{zwzk06}. 








\subsection{Overall pattern result}
\label{sec:clusteringmodel}

Multiple sets of vertices can fulfill the definition  of \patterns, thus, potentially leading to a large amount of patterns that are quite similar -- and, therefore, not interesting for the user.
In Fig.~\ref{fig:redundant}, e.g., the set $O=\{A,B,C,D\}$ and $P=\{A,C,D\}$ are both reasonable \patterns; however, they capture almost the same information.

\begin{figure}[h]
	\vspace*{-1mm}
	\includegraphics[scale = 0.41]{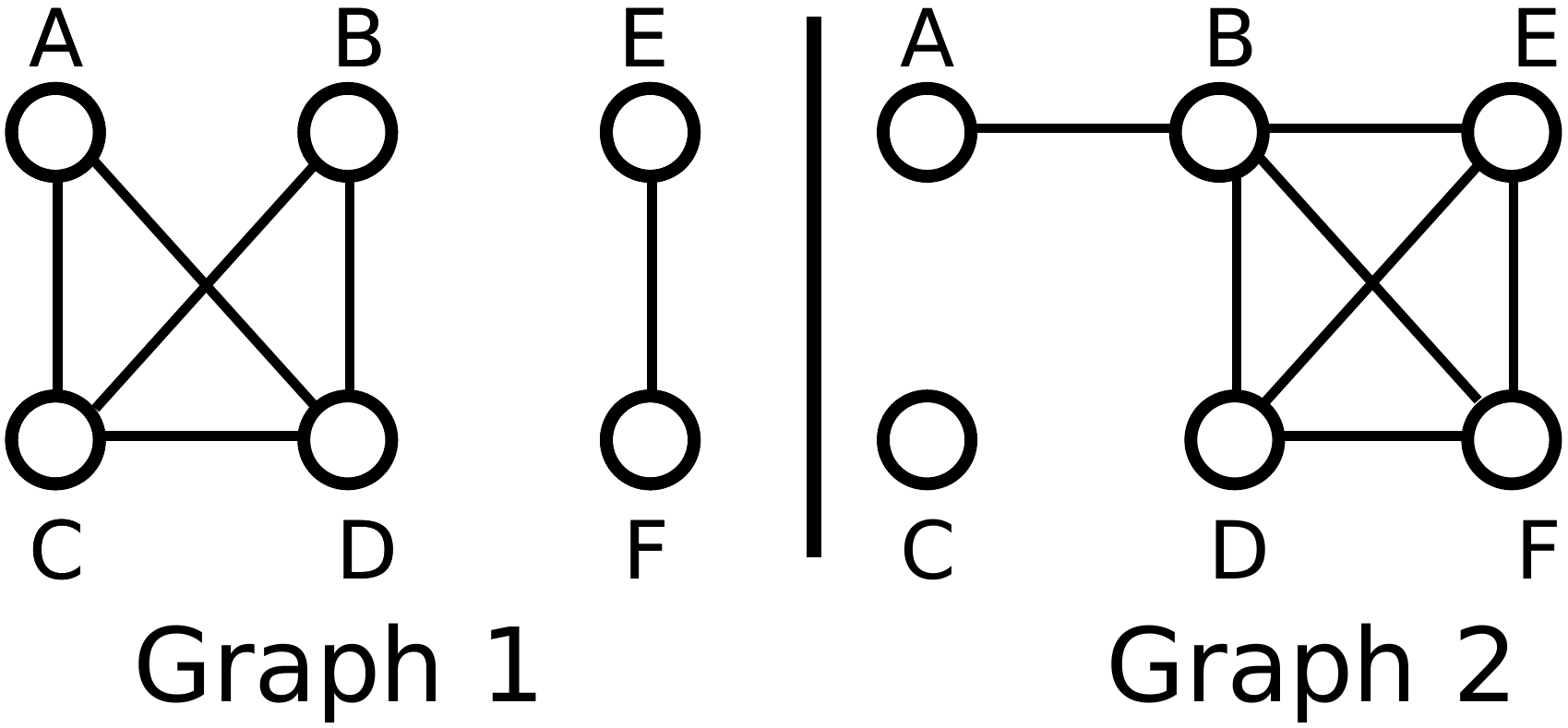}
	\vspace*{-2mm}\caption{Example to illustrate redundant \patterns}
	\label{fig:redundant}
\end{figure}

To refine the output to a smaller subsets of patterns, we consider only the \emph{most interesting}, \textit{non-redundant} ones.

What are the interesting patterns? We argue that two aspects are of primary importance for our task: First, patterns showing a high contrast are interesting. Observing a set of vertices that is completely connected in one but completely unconnected in the other graph is a surprising result.
 Second, patterns of large size are interesting. This matches the idea of classical quasi-clique approaches where primarily maximal patterns are detected. We here simply combine both measures into a single \textit{interestingness} function\\[-2mm]
\begin{equation}\label{eq:inter}
\textstyle I(O) := |O|\cdot c(O)=\frac{2\cdot \text{abs}(|E_1(O)|-|E_2(O)|)}{|O|-1}   
 	\end{equation}
Furthermore, patterns which are not 0.5 quasi-cliques in at least one of the graphs and which are smaller than 4 vertices are not interesting;~\ie~we set $I(O)=-1$ for these patterns.




What are redundant patterns? To define redundancy, we  adapted the redundancy relation as described in~\cite{mimag}. Intuitively, a pattern $O$ is redundant to a pattern $P$, if it is less interesting and a high fraction of $O$'s edges are already covered by $P$. Formally,  
\begin{dft}[Redundancy] \label{def:redundancy}
	Let $O$ and $P$ be two \patterns. $O$ is redundant to $P$ (short: $O \red P$) if\\[-4mm]
	$$O \neq P \wedge I(O) \leq I(P) \wedge \frac{1}{2} \sum_{i \in \{1,2\}} \frac{|E_i(O) \cap E_i(P)|}{|E_i(O)|} \geq r$$
	~\\[-3mm]for the redundancy parameter $r \in (0,1]$.
\end{dft}

Consider again Fig.~\ref{fig:redundant}: The set $O=\{A,B,C,D\}$ has an interestingness of $I(O)=4\cdot 0.5=2$, the set $P=\{A,C,D\}$ of $I(P)=3\cdot 1=3$, and the set $Q=\{B,D,E,F\}$ of $I(Q)=4\cdot 0.66=2.66$. In this example $P$ is more interesting than $O$ since $P$ is not connected at all in the second graph. Thus, $O$ is redundant to $P$ since it is less interesting and they share most of their edges. $Q$ is not redundant to the other patterns since it covers different edges.

Given the interestingness function and the redundancy relation, we define the final output as the result of a constrained combinatorial optimization problem. We aim to find a set of \patterns that are pairwise non-redundant and together maximize the interestingness. Formally,
\begin{dft}\label{eq:problem}
	The final result of patterns is given by\\[-2mm]
$$ \mathcal{R}^*=\arg\max_{\stackrel{\mathcal{R}\subseteq \mathcal{D}}{\neg\exists O,O' \in \mathcal{R}:O\red O'}}\Sigma_{O\in \mathcal{R}} I(O)$$
where $ \mathcal{D}:=\{ O\subseteq V \mid O \text{ is a $(\delta,\delta')$-\pattern and $I(O)>0$}  \} $ is the set of all interesting \patterns

\end{dft}

\subsection{Complexity of Finding \patterns}\label{sec:complexity}
The problem of finding \patterns is NP-hard. This can be shown by reducing the NP-hard quasi-clique detection problem \cite{DBLP:conf/pkdd/LiuW08} to our problem: Assume we want to find (or even enumerate) quasi-cliques in a graph $G=G_1$. By constructing a second graph $G_2$ without any edges, the detection of $(\gamma,0)$-contrasting quasi-cliques in $\{G_1,G_2\}$ corresponds to finding $\gamma$-quasi-cliques in $G_1$.
Hence, our problem is NP-hard, and requires an approximate solution for efficiency. 

\section{Algorithm \software}
\label{sec:algorithm}

To compute \patterns one approach is to use an efficient algorithm to find quasi-cliques (\eg~\cite{mimag,copam,zwzk06,DBLP:conf/pkdd/LiuW08}) in each individual graph first. Then for each cluster one can compute the density of the clusters with respect to the other graph to check whether a high contrast is obtained. Finally, the most interesting and non-redundant patterns can be selected in a post-processing step.

This approach has two severe limitations: First, one might generate a large set of uninteresting patterns that are dense in both graphs. Thus, only wasting computation time. On the other hand, one might miss important patterns. Since the existing quasi-clique approaches are steered towards finding dense subgraphs only, slightly less dense subgraphs might not be reported 
-- these subgraphs, however, might show much stronger contrast values regarding the second graph.

Hence, we have designed the \software algorithm to efficiently identify \patterns. Our algorithm is inspired by the MiMAG algorithm~\cite{mimag} that simultaneously analyzes multiple graphs. The huge difference being that our algorithm is steered towards finding high contrast patterns.

Based on Sec.~\ref{sec:complexity}, we cannot expect to find an efficient algorithm computing an exact result. Thus, instead of determining a result with \emph{maximum} interestingness, we compute a \emph{maximal} result. That is, a result where patterns are pairwise non-redundant, have \emph{high interestingness}, and adding any further pattern would lead to redundancy.

The core idea we follow is twofold: (i) instead of analyzing the graphs individually, we traverse them jointly, and (ii) we traverse the graphs in a informed fashion to enumerate the most interesting patterns first. For this purpose, we adapt the traversal principle proposed in \cite{DBLP:conf/pkdd/LiuW08,mimag}. 

\subsection{Joint enumeration tree}
In our algorithm, vertex sets $O\subseteq V$ are enumerated by a  traversal in the \emph{set enumeration tree} \cite{DBLP:conf/kr/Rymon92}.\footnote{\small To avoid confusion, we use  ``vertex'' for a vertex in the original graph and ``node'' for the nodes of the set enumeration tree, which represent \emph{sets} of vertices.}
An example tree for a graph with three vertices is shown in Figure~\ref{fig:setenumtree}. 
Basically, the set enumeration tree contains all possible vertex sets $O \subseteq V$. 
Each set visited by the traversal of the tree is a potential \pattern.

\begin{figure}[h!]
	\vspace*{-1mm}
	\includegraphics[scale = 0.42] {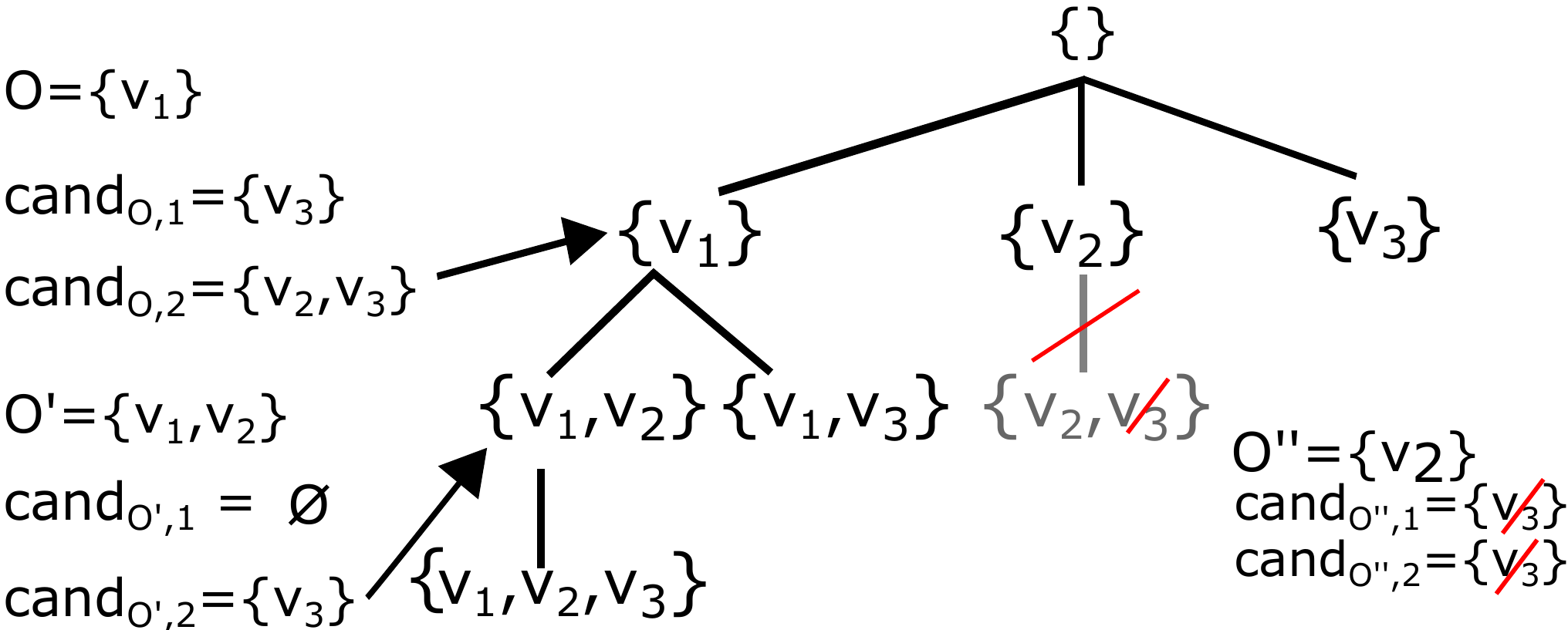}
	\vspace*{-1mm}\caption{Joint set enumeration tree}
	\label{fig:setenumtree}
\end{figure}

The crucial aspect is to prune this enumeration tree, i.e. to avoid traversing it completely. For this purpose, each node $O$ is associated with two candidate sets $cand_{O,1}$ and $cand_{O,2}$. The candidate set $cand_{O,i}$ contains all vertices that can (potentially) be added to $O$ to form a quasi-clique in graph $i$ (one property of our \patterns definition). Note that these two candidate sets might (and will primarily) be different since our goal is to find patterns $X\supseteq O$ that are dense in one graph but sparse in the other.

To lower the cardinality of a candidate set, we apply multiple pruning methods \cite{DBLP:conf/pkdd/LiuW08}. That is, if $O$'s candidate set $cand_{O,i}$ contains a vertex $v$ that can never be part of a quasi-clique $O' \supset O$ in graph $i$, we can delete $v$ from the candidate set. If a vertex $v$ can be removed from \emph{both candidate sets}, the whole subtree rooted at the current node that contains $v$ can be pruned. In none of the graphs, the vertex can be part of a quasi-clique $X\supseteq O\cup \{v\}$.
As an example, consider Figure~\ref{fig:setenumtree} and assume $v_3$ is not a promising vertex w.r.t the node $O''=\{v_2\}$, then $v_3$ can be deleted from both $cand_{O'',1}$ and $cand_{O'',2}$. Lastly, since both candidate sets are empty, the subtree rooted at $\{v_2\}$ can be safely pruned as it can never be part of a quasi-clique.

\subsection{Informed tree traversal}
The above pruning principle lowers the area of the search space that needs to be investigated. However, still too many patterns might be required to analyze. Therefore, in a next step, instead of following an uninformed (e.g. depth-first) traversal of the tree as in \cite{DBLP:conf/pkdd/LiuW08}, we perform a best-first (A*-like) traversal.

To realize this informed search, we need to provide bounds / estimates for the interestingness of the patterns expected in a subtree. For each node $O$, we compute a estimation that provides an \emph{upper bound} for the maximal interestingness of any pattern that can be found in the subtree rooted at $O$. 
Using this bound, we traverse the most promising subtrees first. In Section \ref{sec:quality_estimation}, we show how to compute this bound.

Given this informed search, for each set $O$ we visit during the traversal, we check if $O$ forms a valid \pattern. If so, we add it to an intermediate priority queue -- we cannot add it to the final result yet since there might be further more interesting patterns in other subtrees (the estimated interestingness provides an \emph{upper bound} of the \emph{subtree}'s interestingness only; thus, $O$ itself might have a lower interestingness). This priority queue  contains the set of subtrees that are still to process, as well as the set of already detected patterns that are not added to the result so far. This queue is sorted by the (estimated) interestingness values of the subtrees and patterns.

\subsection{Overall Processing Scheme}
\label{sec:overallScheme}
Based on these ideas, the overall processing of \software is shown in Algorithm \ref{pseudocode}.  Given the two input graphs, \software computes a maximal set of \patterns which contains only non-redundant patterns.
Initially, the set $Result$ is empty (line \ref{line:resultempty}); it will be iteratively filled during the processing. 
The  priority queue contains initially one element which represents the root node of the joint set enumeration tree (line \ref{line:algoIS_clustering_algo_insertCollRoot}) -- at the root, the traversal has to start. Then this queue is processed until it is empty.

If the first element of the queue is a \pattern, no better patterns can exist; in this case (and if the pattern is non-redundant to previously selected patterns), we can add it to the result set (line \ref{line:addClusterToResult}). In contrast, if the first element is a subtree, we continue with the best-first travel. 

In detail, 'continuing the traversal' means: Given the current subtree -- e.g. rooted at $O$ --, we first pick one of the vertices $u\in\bigcup_{i\in\{1,2\}} cand_{O,i}$ from $O$'s candidate set. Here, we use the vertex having the highest degree w.r.t. $O$ since it most probably leads to dense subgraphs.  We then check whether $O\cup \{u\}$ is a valid \pattern; if so, we add it to the queue. Besides the pattern, two further objects are added to the queue: the new subtree rooted at $O\cup \{u\}$, and  the subtree still rooted at $O$ -- but now removed by the branch which represents the already added $O\cup \{u\}$. For example, in Fig. \ref{fig:setenumtree}, assuming we are inspecting the subtree rooted at $\{v_1\}$, one would add the subtree rooted at $\{v_1,v_2\}$ as well as the subtree $\{v_1\}$ removed by the branch $\{v_1,v_2\}$ to the queue.

In general, by always adding these two subtrees to the queue, we ensure that each node in the joint set enumeration tree is exactly visited once. Simultaneously, the overall priority queue ensures that subtrees and patterns with a high interestingness are processed first: The queue is sorted by the (actual/estimated) interestingness values of the patterns/subtrees. Even more, since the estimated interestingness values of the subtrees are upper bounds on the actual interestingness of the contained patterns, this processing guarantees that patterns are added to the final result in monotonically decreasing order of their interestingness values -- thus, approximating our goal of Definition \ref{eq:problem}.

\algrenewcommand\algorithmicindent{0.8em}%

\algnewcommand{\LineComment}[1]{\State \(\triangleright\) #1}

\begin{algorithm}[t]
	
	\caption{\software: Best-first search for \patterns}
	\label{pseudocode} \small

	\begin{algorithmic}[1]
		
		\Require Graphs $G_1,G_2$ with $G_i=(V,E_i)$
		\Ensure Maximal set $Result$ of non-redundant patterns
		
		\State $Result:=\emptyset$ \label{line:resultempty}\Comment{Final result set}
		\State $queue:=[~(\emptyset,[cand_{\emptyset,i}=V]_{i \in \{1,2\}})~]$\label{line:algoIS_clustering_algo_insertCollRoot} \Comment{Priority queue of patterns \& subtrees}
		\While{$queue$ not empty}\label{line:algoIS_clustering_algo_QueueLoop}
		\State $Obj:=queue.\mathrm{pop}()$ \Comment{Select object with highest interestingness}
		\If{$Obj$ is a \pattern $O$} 
		\If{$\neg \exists P \in Result: O \red P \vee P \red O  $} $Result.add(O)$ \EndIf		\label{line:addClusterToResult} 	
		\Else \Comment{$Obj$ is subtree $ST=(O,[cand_{O,i}]_{i \in \{1,2\}})$}
		\State continue tree traversal at $ST$
		\LineComment{thus adding potential patterns and child-subtrees to 
		the queue}
		\EndIf
		\EndWhile
		\State \Return Result
\end{algorithmic}

\end{algorithm}

\begin{theorem}Algo.~\ref{pseudocode} generates a maximal set of non-redundant \patterns.
\end{theorem}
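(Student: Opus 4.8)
The plan is to establish two properties of the output set $Result$: (1) it contains only pairwise non-redundant \patterns, and (2) it is maximal, i.e.\ no further \pattern from $\mathcal{D}$ could be added without creating a redundancy. The first property is essentially immediate from the algorithm; the second requires showing that every \pattern eventually leaves the queue (so nothing is ``lost'' without being considered) and that whenever such a pattern is discarded, it is discarded precisely because of a redundancy with some already-selected pattern.

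\textbf{Step 1 (non-redundancy of the output).} I would argue that line \ref{line:addClusterToResult} only ever inserts a \pattern $O$ into $Result$ when $\neg\exists P\in Result: O\red P\vee P\red O$. Since the check is symmetric in this sense and $Result$ only grows, any two patterns that both end up in $Result$ were, at the time the later one was added, mutually non-redundant; and since redundancy only depends on the two patterns themselves (Definition \ref{def:redundancy}), they remain non-redundant forever. Hence $Result$ is a valid feasible point of the optimization in Definition \ref{eq:problem}. I would also note each $O\in Result$ is a genuine $(\delta,\delta')$-\pattern with $I(O)>0$, since only such objects are ever pushed onto the queue as patterns.

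\textbf{Step 2 (termination / exhaustiveness of the traversal).} Here I would invoke the invariant stated in the text: by always pushing both the child subtree rooted at $O\cup\{u\}$ and the residual subtree rooted at $O$ with the branch $O\cup\{u\}$ removed, every node of the joint set enumeration tree is visited exactly once, and the tree is finite (at most $2^{|V|}$ nodes). Each iteration of the while-loop either converts one subtree into finitely many smaller objects or removes one pattern from the queue, so the process terminates, and upon termination every vertex set $O\subseteq V$ has been examined. In particular, every $O\in\mathcal{D}$ was at some point popped from the queue as a \pattern object and passed to the test in line \ref{line:addClusterToResult}.

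\textbf{Step 3 (maximality).} Suppose, for contradiction, that $Result$ is not maximal: there exists $O\in\mathcal{D}$ with $O\notin Result$ such that $Result\cup\{O\}$ is still pairwise non-redundant. By Step 2, $O$ was popped as a \pattern and reached line \ref{line:addClusterToResult}. It was not added, so at that moment there was some $P$ in the then-current $Result$ with $O\red P$ or $P\red O$. Since $Result$ only grows, this $P\in Result$ at termination as well, contradicting the assumption that $Result\cup\{O\}$ is pairwise non-redundant. Hence $Result$ is maximal.

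\textbf{Main obstacle.} The delicate point is Step 2: one must be sure that \emph{every} \pattern in $\mathcal{D}$ is actually enumerated, i.e.\ that the candidate-set pruning never discards a vertex set that is a valid \pattern. This relies on the soundness of the pruning rules adapted from \cite{DBLP:conf/pkdd/LiuW08}: a vertex $v$ is removed from $cand_{O,i}$ only if no superset $O'\supseteq O\cup\{v\}$ can be a $\delta$-quasi-clique in graph $i$, and a subtree is pruned only when $v$ is removed from \emph{both} candidate sets — in which case no superset can satisfy the first bullet of the \pattern definition (being a $\delta$-quasi-clique in \emph{at least one} graph) and hence cannot lie in $\mathcal{D}$. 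I would therefore state explicitly (citing the established correctness of these pruning techniques) that the pruning is loss-free with respect to $\mathcal{D}$, so that Step 2's claim ``every $O\in\mathcal{D}$ is eventually popped'' holds; the A*-style ordering by the interestingness upper bound affects only the \emph{order} of enumeration, not which patterns are enumerated, and so does not interfere with maximality.
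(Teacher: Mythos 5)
Your proposal is correct and follows essentially the same route as the paper's own (much terser) proof sketch: non-redundancy is enforced by the check at line~\ref{line:addClusterToResult}, and maximality follows because the traversal visits every potential \pattern, so any pattern absent from the result must have been rejected due to redundancy with a pattern already in the result. Your explicit identification of the soundness of the candidate-set pruning as the point on which exhaustiveness hinges is a welcome refinement of the paper's one-line assertion that ``the tree traversal ensures that every node which might be a \pattern is visited,'' but it is the same argument.
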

\begin{proof}(Sketch) (i) Clearly the result contains no redundancy as ensured by line \ref{line:addClusterToResult}. (ii) The tree traversal ensures that every node which might be a \pattern is visited. Further, every node that is a \pattern, will be added to the queue. Since the queue is processed until empty, every pattern is either redundant or added to the result. Thus, the result is maximal.$\hfill$
\end{proof}

\textbf{Computational Complexity: } \software follows an A*-like principle to enumerate contrasting patterns. Thus, its efficiency depends on the bounds for the interestingness of subtrees; we shall show below in Sec.~\ref{sec:quality_estimation} how we determine this upper bound. In the worst case, each node in the enumeration tree has to be visited leading to an exponential complexity in the number of vertices, as in the A*-search strategy. However, in practice our upper bounds combined with our pruning strategies result in fewer nodes visited in the enumeration tree. Indeed, in Sec.~\ref{exp:overall}, we study an alternative to \software by turning off all these techniques demonstrating the effectiveness of our pruning strategy in real-world datasets.

\subsection{Bounds for the Interestingness}
\label{sec:quality_estimation}
Last, we present our upper bound for the interestingness of subtrees. This bound is crucial for the best-first traversal and to find the most interesting patterns. Moreover, if the estimate is not positive we can even prune the whole subtree since no interesting patterns can be found in it.

Formally, we are interested in finding a function $I^*(O)$ such that
$$\text{  $\forall X\supset O,X\subseteq S_O$: $X$ is a \pattern}\Rightarrow I(X)\leq I^*(O) $$ where $S_O:=O\cup cand_{O,1}\cup cand_{O,2}$.
That is, the interestingness of each \pattern located in the subtree rooted at $O$ is bounded from above by $I^*(O)$. Of course, the bound needs to be efficiently computable, i.e we cannot simply enumerate all $X$. 

To derive the bound, let us first introduce a definition. The set of edges connecting two disjoint sets $A$ and $B$ in graph $i$ is denoted with 
$$E_i(A\leftrightarrow B):=\{(u,v)\in E_i \mid u\in A, v\in B\}$$
Thus, $|E_i(A\leftrightarrow B)|=\sum_{a\in A}deg_{G_i}^B(a)=\sum_{b\in B}deg_{G_i}^A(b)$ where $deg_{G_i}^Y(x)$ (see. Def. \ref{def:clique}) denotes the degree of vertex $x$ w.r.t. the set $Y$ in graph $i$.

Since the interestingness (Eq. \eqref{eq:inter}) decomposes into the difference in the number of edges and the cardinality of the set, we first derive the following result:

\begin{theorem}
Assume $X$, $O\subset X\subseteq S_O$, is a \pattern which forms a quasi-clique in graph $i$, it holds:
$|E_i(X)|-|E_j(X)|\leq d_{ij}(O)$ where

\begin{equation}
\begin{split}
d_{ij}(O):=|E_i(O)|-|E_j(O)|+\!\!\!\!\!\!\!\!\!\sum_{v\in cand_{O,i}}\!\!\!\!\max\{0,deg_{G_i}^O(v)- \\deg_{G_j}^O(v)+ \frac{1}{2}deg_{G_i}^{cand_{O,i}}(v)\}
\end{split}
\end{equation}

\end{theorem}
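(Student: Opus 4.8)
The plan is to bound $|E_i(X)| - |E_j(X)|$ by splitting the edge sets according to how they intersect $O$ and $X \setminus O$. Write $C := X \setminus O \subseteq cand_{O,i}$ (this inclusion holds because $X$ forms a quasi-clique in graph $i$, so every vertex of $X$ must survive in the candidate set $cand_{O,i}$). Then $E_i(X)$ decomposes into three disjoint pieces: edges inside $O$, edges between $O$ and $C$, and edges inside $C$. Thus
\begin{equation}
|E_i(X)| = |E_i(O)| + |E_i(O \leftrightarrow C)| + |E_i(C)|,
\end{equation}
and likewise for graph $j$. Subtracting, $|E_i(X)| - |E_j(X)| = \big(|E_i(O)| - |E_j(O)|\big) + \big(|E_i(O\leftrightarrow C)| - |E_j(O\leftrightarrow C)|\big) + \big(|E_i(C)| - |E_j(C)|\big)$.

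Next I would bound the last two (mixed and internal) terms from above, dropping the graph-$j$ contributions since they are nonnegative and we only need an upper bound. For the cross term, $|E_i(O\leftrightarrow C)| = \sum_{v\in C} deg_{G_i}^O(v)$; dropping $|E_j(O\leftrightarrow C)| = \sum_{v\in C} deg_{G_j}^O(v) \geq 0$ would be too lossy, so instead I would keep the difference per vertex: $|E_i(O\leftrightarrow C)| - |E_j(O\leftrightarrow C)| = \sum_{v\in C}\big(deg_{G_i}^O(v) - deg_{G_j}^O(v)\big)$. For the internal term, $|E_i(C)| = \frac{1}{2}\sum_{v\in C} deg_{G_i}^C(v) \leq \frac{1}{2}\sum_{v\in C} deg_{G_i}^{cand_{O,i}}(v)$ since $C \subseteq cand_{O,i}$, and $|E_j(C)| \geq 0$. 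Combining these and then enlarging the summation index set from $C$ to all of $cand_{O,i}$ — which is valid because each summand is replaced by $\max\{0, \cdot\}$, making every term nonnegative so adding more terms only increases the bound — yields exactly $d_{ij}(O)$.

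The one step that needs care, and which I expect to be the main (minor) obstacle, is the passage from summing over $C$ to summing over $cand_{O,i}$: for $v \in C$ the summand is $deg_{G_i}^O(v) - deg_{G_j}^O(v) + \frac{1}{2}deg_{G_i}^{cand_{O,i}}(v)$, which need not be nonnegative, whereas $d_{ij}(O)$ uses $\max\{0, \cdot\}$. I would argue that $\sum_{v\in C}(\text{summand}) \leq \sum_{v \in C}\max\{0,\text{summand}\} \leq \sum_{v\in cand_{O,i}}\max\{0,\text{summand}\}$, the first inequality because $x \leq \max\{0,x\}$ pointwise and the second because $C \subseteq cand_{O,i}$ and all terms of the larger sum are nonnegative. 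One subtlety worth a remark: using $deg_{G_i}^{cand_{O,i}}(v)$ rather than $deg_{G_i}^{C}(v)$ is what makes the bound computable without knowing $C$, and the factor $\frac{1}{2}$ survives because each internal edge of $C$ is counted once from each endpoint in $\frac{1}{2}\sum_{v\in C}deg_{G_i}^C(v)$; when we over-count by replacing $C$ with $cand_{O,i}$ we still only need an upper bound, so this is harmless. Assembling the three bounded pieces gives the claimed inequality $|E_i(X)| - |E_j(X)| \leq d_{ij}(O)$.
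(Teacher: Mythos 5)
Your proposal is correct and follows essentially the same route as the paper's proof: the identical three-way decomposition of $E_i(X)$ into edges inside $O$, edges between $O$ and $X\setminus O$, and edges inside $X\setminus O$, followed by dropping the nonnegative graph-$j$ internal term, bounding $deg_{G_i}^{X\setminus O}(v)$ by $deg_{G_i}^{cand_{O,i}}(v)$ using $X\setminus O\subseteq cand_{O,i}$, and enlarging the index set from $X\setminus O$ to $cand_{O,i}$ via the pointwise $\max\{0,\cdot\}$. Your extra remark on why the summand must be wrapped in $\max\{0,\cdot\}$ before extending the sum is exactly the step the paper also takes, just made more explicit.
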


\begin{proof}
	Clearly $E_i(X)=E_i(O)\cup E_i(X\backslash O) \cup E_i(O\leftrightarrow X\backslash O)$. And since the above sets are disjoint 
	$|E_i(X)|=|E_i(O)|+| E_i(X\backslash O) |+ |E_i(O\leftrightarrow X\backslash O)|$.
	 Further, since we consider undirected graphs we have
	 $|E_i(X\backslash O)|= \frac{1}{2}\sum_{v\in X\backslash O}  deg_{G_i}^{X\backslash O}(v) $. And since $X$ is assumed to be a quasi-clique in graph $i$ (i.e. $X\subseteq O\cup cand_{O,i}$), we have $deg_{G_i}^{X\backslash O}(v)\leq deg_{G_i}^{cand_{O,i}}(v)$.
	 It follows, $|E_i(X)|-|E_j(X)|= |E_i(O)|-|E_j(O)|+ |E_i(O\leftrightarrow X\backslash O)|-|E_j(O\leftrightarrow X\backslash O)|+
	E_i(X\backslash O) |-E_j(X\backslash O) |
	\leq
	|E_i(O)|-|E_j(O)|+\sum_{v\in X\backslash O}[deg_{G_i}^O(v)-deg_{G_j}^O(v)+\frac{1}{2}deg_{G_i}^{X\backslash O}(v)]
	\leq	
	|E_i(O)|-|E_j(O)|+\sum_{v\in X\backslash O}\max\{0,deg_{G_i}^O(v)-deg_{G_j}^O(v)+\frac{1}{2} deg_{G_i}^{cand_{O,i}}(v)\}
	\leq
	|E_i(O)|-|E_j(O)|+\sum_{v\in cand_{O,i}}\max\{0,$ $deg_{G_i}^O(v)-deg_{G_j}^O(v)+\frac{1}{2} deg_{G_i}^{cand_{O,i}}(v)\} 
	$ $\hfill$
\end{proof}

The above result gives an estimate on the edge difference in a \pattern. Note that it is highly efficient to compute since we only have to iterate once through all vertices in the candidate set $cand_{O,i}$. The number of edges in $O$ as well as the degrees can easily be maintained during the run of the algorithm and don't need to be recomputed all the time.
Using the above result leads to:
\begin{theorem}
	Assume $X$, $O\subset X\subseteq S_O$, is a \pattern, it holds:
	$I(X)\leq I^*(O):=\frac{2}{|O|} \max\{d_{12}(O),d_{21}(O)\}$
\end{theorem}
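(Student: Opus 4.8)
The key observation, already noted just before the statement, is that the interestingness of a candidate set splits into an edge-difference part and a size part. Unfolding $I(X)=|X|\cdot c(X)$ together with $c(X)=|\alpha_{G_1}(X)-\alpha_{G_2}(X)|=\tfrac{2\,\mathrm{abs}(|E_1(X)|-|E_2(X)|)}{|X|\,(|X|-1)}$ gives
\[
I(X)=\frac{2\,\mathrm{abs}\bigl(|E_1(X)|-|E_2(X)|\bigr)}{|X|-1}.
\]
I would then bound the numerator with the preceding theorem and the denominator from below using $O\subset X$.

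\emph{Denominator.} Since $O\subset X$ we have $|X|\ge|O|+1$, hence $|X|-1\ge|O|$ and $\tfrac{1}{|X|-1}\le\tfrac{1}{|O|}$.

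\emph{Numerator.} As $X$ is a \pattern, it is a $\delta$-quasi-clique in at least one of the two graphs; name it graph $i$ and the other $j$. The previous theorem then gives $|E_i(X)|-|E_j(X)|\le d_{ij}(O)$. When $|E_i(X)|\ge|E_j(X)|$ this already says $\mathrm{abs}(|E_1(X)|-|E_2(X)|)=|E_i(X)|-|E_j(X)|\le d_{ij}(O)\le\max\{d_{12}(O),d_{21}(O)\}$; the opposite case is handled symmetrically, so in both cases $\mathrm{abs}(|E_1(X)|-|E_2(X)|)\le\max\{d_{12}(O),d_{21}(O)\}$. Since $d_{12}(O)+d_{21}(O)$ equals the sum of the two non-negative candidate-set sums (the $|E_i(O)|-|E_j(O)|$ contributions cancel), $\max\{d_{12}(O),d_{21}(O)\}\ge 0$, so the two bounds compose. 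Combining,
\[
I(X)=\frac{2\,\mathrm{abs}\bigl(|E_1(X)|-|E_2(X)|\bigr)}{|X|-1}\le\frac{2}{|O|}\max\{d_{12}(O),d_{21}(O)\}=I^*(O).
\]

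\textbf{Where care is needed.} The algebra is routine; the delicate point is the numerator bound. The previous theorem certifies $|E_i(X)|-|E_j(X)|\le d_{ij}(O)$ only for the graph $i$ in which $X$ is a quasi-clique — its proof uses exactly $X\setminus O\subseteq cand_{O,i}$ — whereas a \pattern need only be a quasi-clique in one graph. One must therefore make sure that the graph carrying the larger edge count inside $X$ is the one whose candidate set drives the matching $d_{\cdot\cdot}(O)$, or else invoke that, by soundness of the pruning, whenever $X$ is reachable in the subtree rooted at $O$ all of $X\setminus O$ lies in the candidate set of whichever graph makes $X$ a $\delta$-quasi-clique, so that the maximum in $I^*$ accounts for both index orderings. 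I would spell out this alignment explicitly; once it is in place, $\mathrm{abs}(|E_1(X)|-|E_2(X)|)\le\max\{d_{12}(O),d_{21}(O)\}$ and hence the theorem follow immediately from the two displayed estimates above.
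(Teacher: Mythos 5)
Your proof takes exactly the same route as the paper's: the paper's entire argument for this theorem is the two-step chain $I(X)=\frac{2\cdot \mathrm{abs}(|E_1(X)|-|E_2(X)|)}{|X|-1}\le\frac{2\cdot \mathrm{abs}(|E_1(X)|-|E_2(X)|)}{|O|}\le\frac{2\cdot\max\{d_{12}(O),d_{21}(O)\}}{|O|}$, i.e.\ precisely your denominator bound ($|X|-1\ge|O|$ from $O\subset X$) followed by your numerator bound via the preceding theorem. The subtlety you flag at the end is real and is silently glossed over by the paper: the preceding theorem certifies $|E_i(X)|-|E_j(X)|\le d_{ij}(O)$ only for the graph $i$ in which $X$ forms a quasi-clique (its proof needs $X\setminus O\subseteq cand_{O,i}$), so when the \emph{other} graph happens to contain more edges inside $X$, the ``symmetric'' case does not follow from that theorem as stated, and the paper supplies no additional argument for it. Your explicit insistence on aligning the index ordering with the quasi-clique graph (or otherwise justifying the reverse direction) is therefore a genuine improvement in rigor rather than pedantry; aside from that, the two proofs coincide.
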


\begin{proof}
$ I(X) = \frac{2\cdot \text{abs}(|E_1(X)|-|E_2(X)|)}{|X|-1}   
\leq$ $ \frac{2\cdot \text{abs}(|E_1(X)|-|E_2(X)|)}{|O|} 
\leq\frac{2\cdot \max\{d_{12}(O),d_{21}(O)\}}{|O|}
$ 
\end{proof}

Given that $I(X) \leq I^*(O)$, we can safely prune the whole subtree when $I^*(O) \leq 0$ since there is no contrast at all.

\section{Experimental Evaluation}
\label{sec:results}

In the following, we evaluate the quality of the detected patterns and runtime of our algorithm on synthetic and real-world datasets.\footnote{Our experiments were conducted on a Core i7 3.5 GHz CPU with Java8 64 bit.} As there is no ground-truth available regarding contrasting patterns, which hinders the evaluation of the algorithm, we highlight some characteristics of the resulting contrasting patterns. 

\textbf{Parameter selection.} The redundancy parameter $r$ along with $\delta$ and $\delta'$ play a relevant role in the task of mining contrasting patterns. To showcase our \pattern model and the \software algorithm, in our experimental setting, we have used $\delta \geq 0.5$ and $\delta' = 0$;~\ie~we are interested in any contrasting pattern that is tightly and evenly connected in at least one graph (see Sec.~\ref{sec:patternModel}). Also, we have set $r=0.1$ as our interest is in patterns with low redundancy.

Still, we believe that these parameters should be selected according to the application of interest. Thus, \textbf{our algorithm and all used datasets are available at URL}.\footnote{In order to keep this submission double-blinded the address will be available upon acceptance}

\begin{figure*}[t]%
	\centering
	\vspace*{-1mm}
	\includegraphics[clip, scale = 0.85]{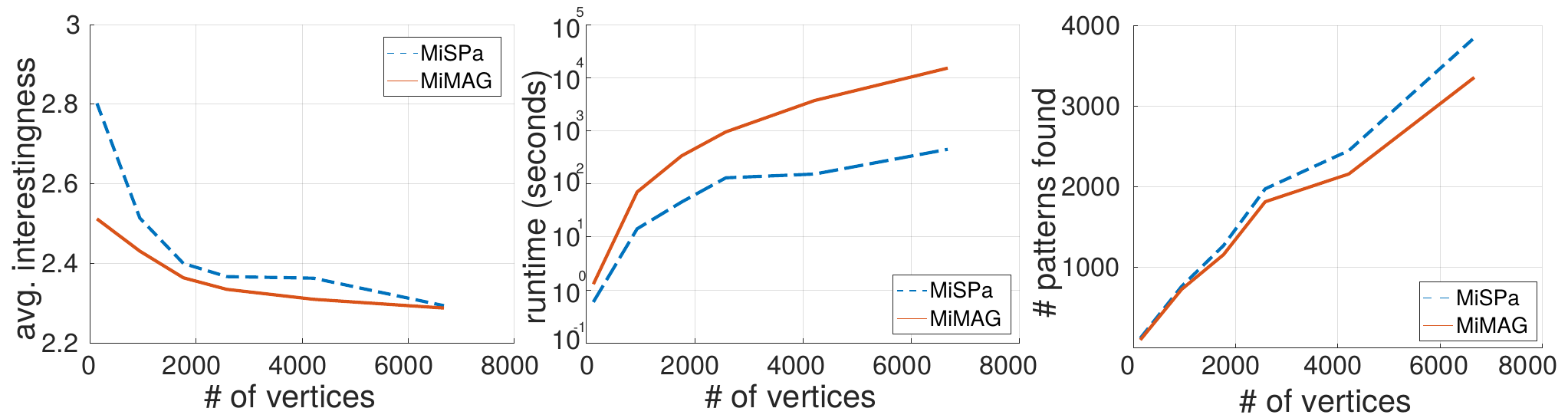} 
\vspace*{-2mm}	
\caption{Quality, runtime, and number of patterns of \software contrasted against MiMAG. }
	\label{fig:quality}
	
\end{figure*}

%

\textbf{Competing approaches.} Intuitively, to detect contrasting patterns, it would be possible to consider the complement graphs and apply a dense subgraph extraction algorithm on them. That is, given two graphs $G_1 = (V, E_1)$ and $G_2 = (V, E_2)$ build their complements $\overline{G_1} = (V, \overline{E_1})$, $\overline{G_2} = (V, \overline{E_2})$ where $\overline{E_i} = (V \times V) \setminus E_i$. Then, apply a cross-graph quasi-clique method first on the combination $G_1,\overline{G_2}$ and then on $\overline{G_1}, G_2 $. Next, calculate the interestingness of the patterns considering our interestingness function (\ie~in this step we analyze both graphs $G_1$ and $G_2$). Before reporting the final resulting set, we remove redundant patterns using our redundancy measure. 

Notice that it is crucial to run the method only on the combinations $\overline{G_i}, G_j $ $i \neq j$ since sparse patterns might appear in either of the four input graphs.\footnote{Clearly, patterns over the combination $\overline{G_1}, \overline{G_2}$ and $G_1, G_2$ are misleading. In the first case, patterns are irrelevant, in the second we are computing regular cross-graph quasi-cliques.}

For our experimental evaluation we have selected MiMAG~\cite{mimag} as the competing approach. The rationale is twofold: First, it is based on the $\delta$-Quasi-Clique definition\footnote{In MiMAG, however, is referred to as $\gamma$-Quasi-cliques.} as our \patterns allowing a more fair comparison, second, it allows fast and efficient computation of cross-graph $\delta$-quasi-cliques. Regarding MiMAG parameters we have set $\gamma = \delta$. In our experiments contrasting patterns detected with MiMAG on complement graphs are simply denoted MiMAG.

For further evaluation of our pruning and estimation techniques we study an alternative of \software by turning off all these techniques.

\subsection{Evaluation on synthetic graphs}
\label{sec:synthetic}

We have evaluated our approach by generating synthetic graphs containing contrasting patterns as well as vertices and edges that do not belong to any pattern (\ie~noise vertices and edges). For this, we generated two random graphs following a power-law distribution and we randomly embedded quasi-cliques of size 10 and density 0.6 to each of them. Since, due to the random embedding, the quasi-cliques cover different node sets in both graphs, we will observe contrast in the two graphs. We scaled the size of these graphs from 110 vertices (442 edges) to 6672 vertices (29464~edges). 

In Fig.~\ref{fig:quality} we report the quality and runtime of \software and MiMAG on these graphs. \software produces patterns with slightly higher interestingness (left plot). 
Clearly, mining contrasting patterns with \software and analyzing the complement graphs with MiMAG are highly related. The strong difference, though, becomes clear when considering the runtime.
As shown in the middle plot, \software clearly outperforms MiMAG (note the logarithmic scale). Indeed, as the size of the graph increases, so does the difference in the running time between both approaches. The approach based on complement graphs does not scale to larger datasets.

It is also important to note that the lower runtime of \software is not simply due to a smaller number of patterns (trivially, zero runtime could be achieved by reporting no patterns at all). Fig.~\ref{fig:quality} (right) shows that the number of patterns found by \software is even slightly larger than MiMAG;~\eg~in the largest synthetic graph \software detected 3848 patterns while MiMAG 3356. 

As shown in Fig.~\ref{fig:density}, MiMAG favors the detection of dense and large quasi-cliques suggesting a trade-off between the size-density and the number-quality of the detected patterns. \software favors the detection of high contrasting patterns while MiMAG large and dense patterns.

\begin{figure*}[t]%
	\centering
	\vspace*{-1mm}
	\includegraphics[clip, scale = 0.85]{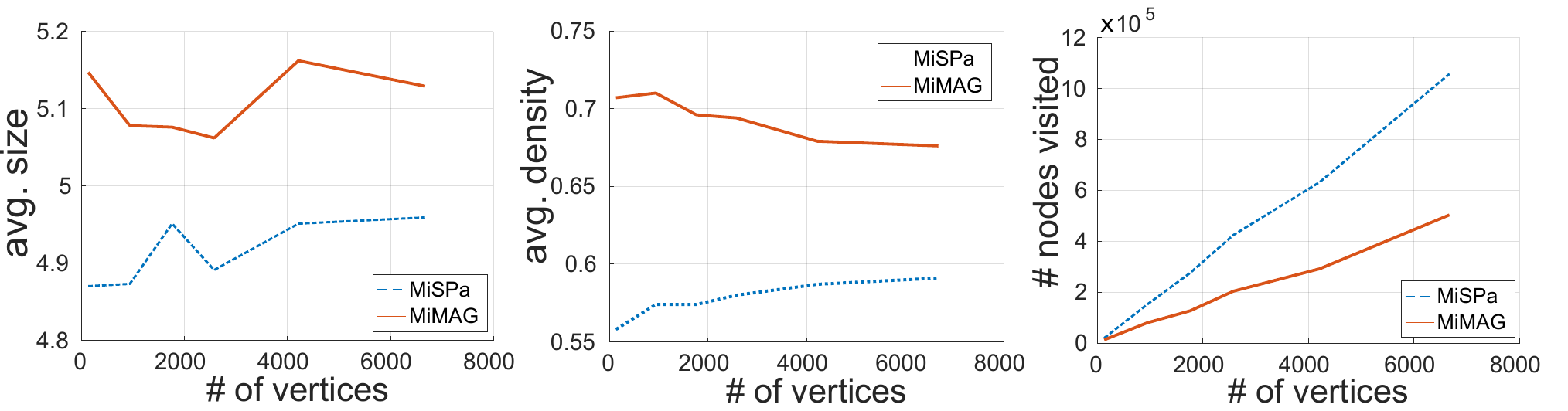} 
	\vspace*{-2mm}	
	\caption{Size, Density, and number of visited nodes in the enumeration tree of \software contrasted against MiMAG. }
	\label{fig:density}
	
\end{figure*}

Given that \software produces more patterns, it visits more nodes in the set enumeration tree (see right plot Fig.~\ref{fig:density}) but obtains fewer runtime. The underlying reason is that our used pruning techniques are more effective and efficient in sparse graphs. For example, some of the techniques discard unpromising nodes based on the graphs' diameter \cite{pjz05}; these principles are more expensive to compute in the dense complement graphs used in MiMAG and do not provide a significant pruning.

%
%
%


\subsection{Evaluation on real-world datasets: Case Studies}\label{exp:case}

Our next evaluation considers four real-world datasets. We provide  case studies (Sec. \ref{exp:case}) as well as overall statistics and performance results (Sec. \ref{exp:overall}).


\textbf{Arxiv.} The first real-world dataset considers a portion of the Arxiv database\footnote{\url{https://www.cs.cornell.edu/projects/kddcup/datasets.html}}. Here, vertices of the graph  are papers. 
For the first graph, two papers \texttt{A} and \texttt{B} are connected by an edge, if \texttt{A} cites or is being
 referenced by \texttt{B}. A second graph is constructed, considering the same papers, and computing  the similarity of the abstracts. We used word2vec embeddings and cosine distance to measure similarity. Two papers are connected by an edge, if their similarity is larger than 0.98. In principle, papers with similar topics should be highly connected in the citation/reference graph. 

 So, our interest is to find similar papers without common cites, or cited papers in dissimilar topics. Overall, there are 6156 vertices and 20515 edges. 
 
 	 \begin{figure}[h]
 	 	\centering	
 	 	\includegraphics[scale = 0.45,trim= 0cm 0cm 0 0] {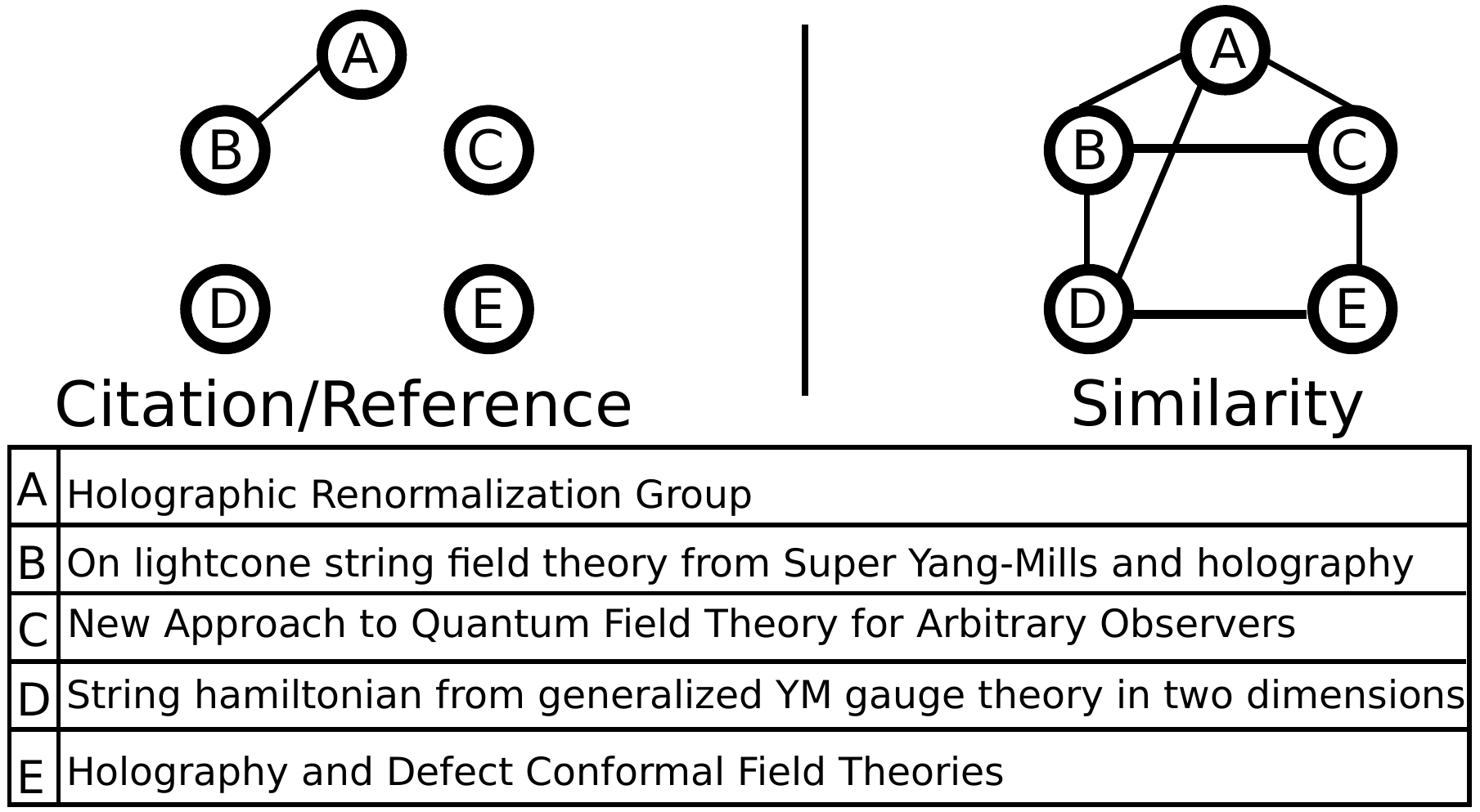}
 	 	\vspace*{-1mm}
 	 	\caption{Papers with textual similarity and few citations/references.}
 	 	\label{fig:arxiv}
 	 \end{figure}
 
Fig.~\ref{fig:arxiv} illustrates one contrasting pattern. Note five papers forming a 0.5-quasi-clique in the Similarity graph but with only few citations or references. Particularly, papers \texttt{A}, \texttt{B}, and \texttt{D} are highly related (forming a clique) to the Yang-Mill theory while the rest of them are partially similar and arise in the context of string theory. Despite the similarity, only papers \texttt{A} and \texttt{B} have a connection in the citation graph. The interestingness of the pattern is~3.

\textbf{Friendship/Facebook.} In the following experiment we have contrasted real-life friendship versus Facebook friendship among High School students, considering the data  in~\cite{Mastrandrea16}. 
In this graph we have over 208 vertices and 1843 edges.
\begin{figure}[h]
	\vspace*{-1mm}\centering
	\includegraphics[scale = 0.4] {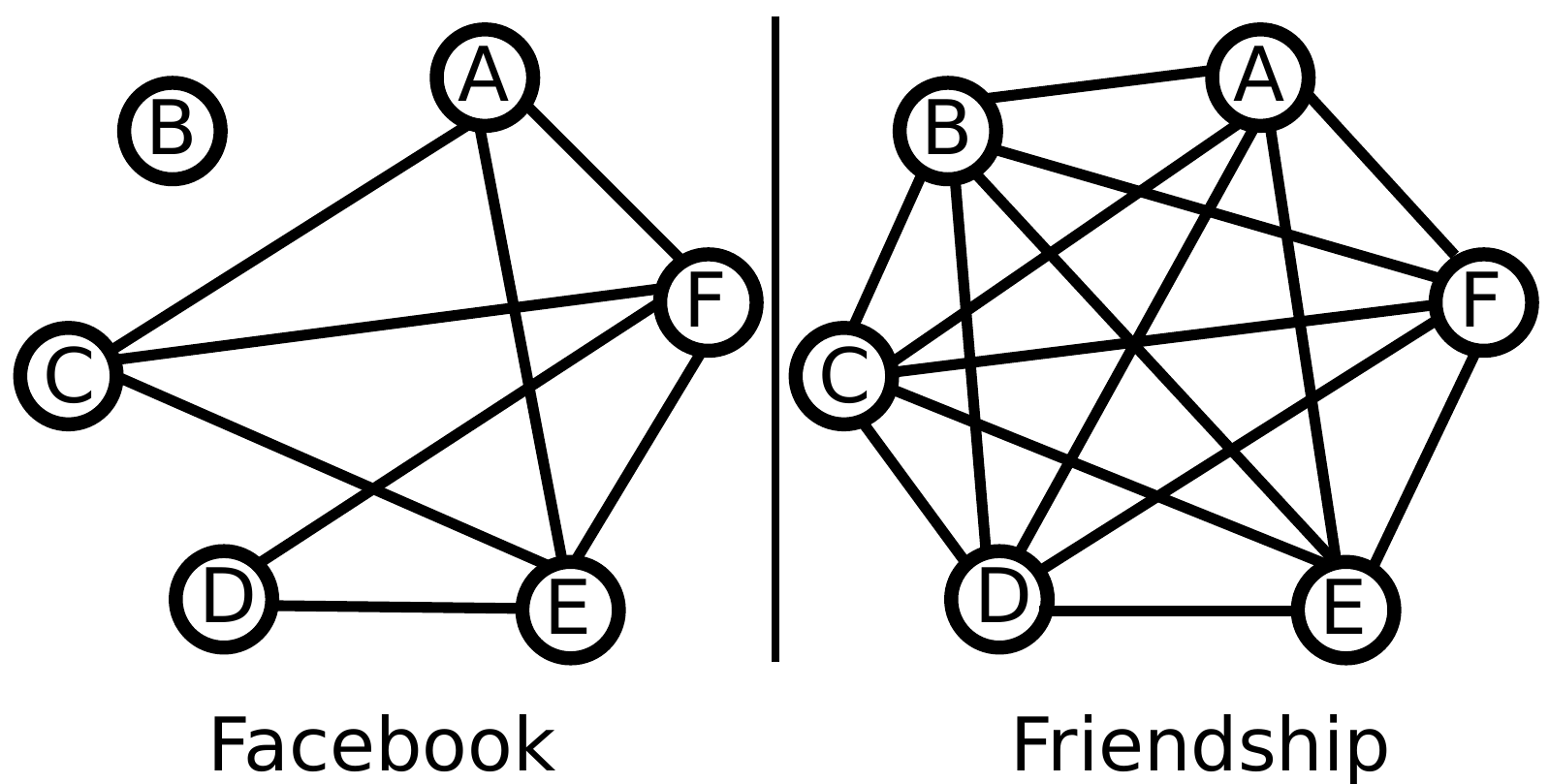}
	\vspace*{-1mm}	\caption{A \pattern w.r.t. friendship.}
	\label{fig:facebook_vs_friends}
\end{figure}

Fig.~\ref{fig:facebook_vs_friends} exhibits a contrasting pattern where all six students have a real-life friendship, but they are very sparse in the Facebook graph. Interestingly, student \texttt{B} is not a Facebook friend of the other five students despite being a highly connected in the real-life graph. The interestingness of this pattern is 2.8.


\textbf{DBLP.} In our next case study, we have used a portion of the DBLP database\footnote{\url{http://dblp.uni-trier.de}}. In this experiment, each author is a vertex of the KDD, ICDM, VLDB, ICML, WWW, and PAKDD venues, and there is an edge connecting two authors if they have co-authored at least two papers between the years 2000 and 2007. Likewise, a second graph is constructed considering co-authorship between 2008 and 2015. Intuitively, we aim at finding collaborations that start or end over time. This graph contains 5319 vertices and 7012 edges.
%

%
%
 

In Fig.~\ref{fig:dblp2} we can see an example of a collaboration that goes from sparse to highly dense. For example, Gupta and Gosh are no longer co-authors after 2007, in our co-authorship graph, and Gupta became connected with other Ghosh co-authors. By contrast, Ghosh remains collaborating with Dhillon and Deodhar. The interestingness of the pattern is 3.
\begin{figure}[t]
	\vspace*{-1mm}
	\centering
	\includegraphics[scale = 0.5, trim = 0 0 0 0cm] {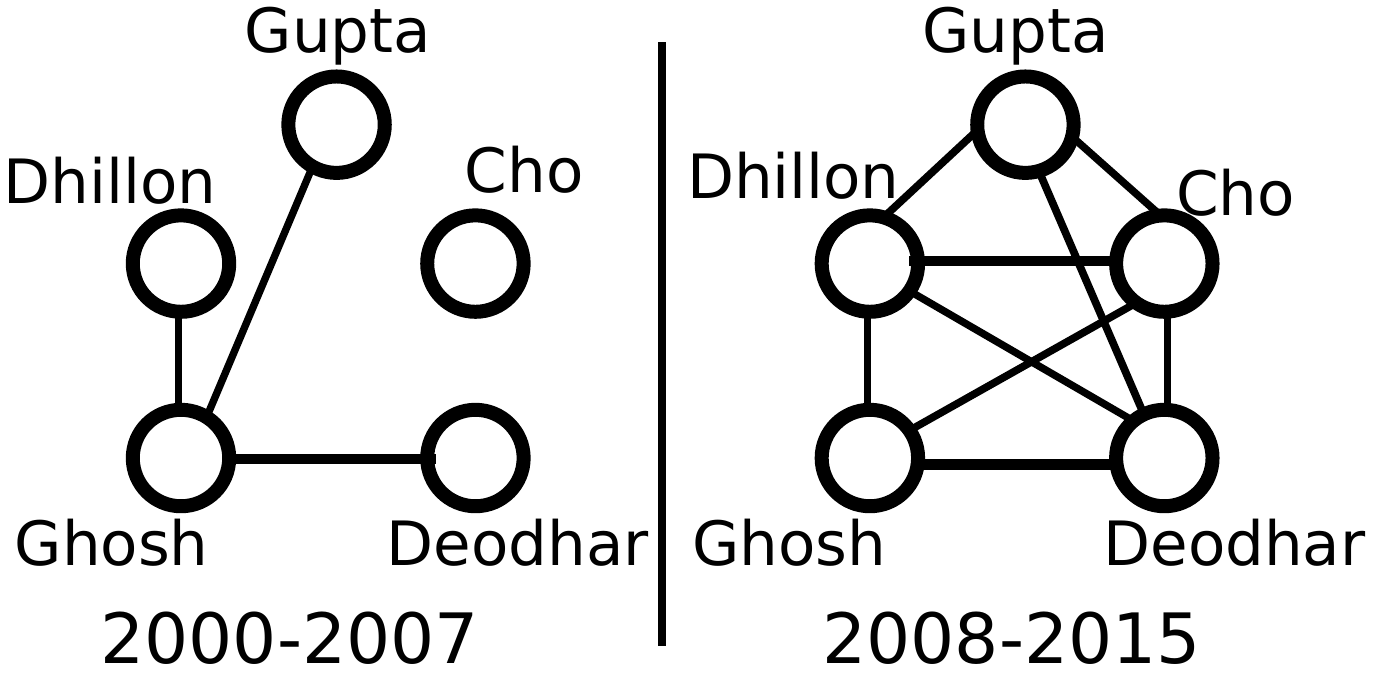}
	\vspace*{-1mm}\caption{A contrasting pattern in the co-authorship graphs.}
	\label{fig:dblp2}
\end{figure}
\newcommand*\rot{\rotatebox{90}}
\begin{table*}[t]
	
	\begin{tabular}{|l|l|l|l|l|l|l|l|l|}
		\hline
		& \multicolumn{2}{c|}{DBLP}    & \multicolumn{2}{c|}{Arxiv}          & \multicolumn{2}{c|}{Laws}           & \multicolumn{2}{c|}{Friendship}      \\ \hline
		& \rot{\software} & \rot{MiMAG}   & \rot{\software} & \rot{MiMAG}   & \rot{\software} & \rot{MiMAG}   & \rot{\software} & \rot{MiMAG}    \\ \hline
		runtime (min)                             & \textbf{0.08}   & 313.5           & \textbf{0.15}   & 584.5           & \textbf{0.009}  & 2.1             & \textbf{2.52}   & \textbf{2.52}  \\ \hline
		nodes visited $\times10^4$ & 11.58             & \textbf{2.41} & \textbf{1.75}   & 2.42            & 0.95              & \textbf{0.58} & 240.6             & \textbf{41.36} \\ \hline
		contrast: avg($I(O)$)                     & \textbf{3.72}   & 3.62            & 4.02              & \textbf{4.05} & \textbf{5.79}   & 5.77            & 4.7               & \textbf{6.2}   \\ \hline
		contrast: sum($I(O)$)                     & \textbf{1255.7} & 698.6           & \textbf{1950}   & 1200            & \textbf{81}     & 75              & \textbf{539.7}  & 254.4            \\ \hline
		density: avg($\gamma$)                    & 0.57              & \textbf{0.74} & \textbf{1}      & \textbf{1}    & 0.78              & \textbf{0.85} & 0.86              & \textbf{0.88}  \\ \hline
		size: avg(|O|)                            & 5.32              & \textbf{5.38} & \textbf{4.04}   & \textbf{4.04} & 5.12              & \textbf{6.84} & 5.55              & \textbf{7.22}  \\ \hline
	\end{tabular}
	\vspace*{0mm}
	\caption{Performance on real data. \software detects high contrast patterns with a  significantly lower runtime.}
	\label{tbl:results}	
	\vspace*{-2mm}
\end{table*}

\textbf{Law data.}
Our last experiment comes from a collection of civil laws in Germany (the BGB). To construct this graph we have considered a law (\ie~a paragraph) as a vertex in the graph. In the first graph there is an edge connecting two laws, if one mentions (or is being mentioned by) the other. Then, we have extracted the top 21 keywords for each paragraph, and constructed a second graph by connecting two laws if they share at least 8 keywords. In principle, laws highly connected in the citation (reference) graph should also be textually related. Hence, the goal is to find dissimilar laws highly connected in the citation graph, and similar laws with fewer connections in the citation graph. 
Overall, the graph contains 2402 vertices and 3054 edges. 

In Fig.~\ref{fig:laws} we illustrate one of these patterns. Notice that in terms of similarity the five laws form a 0.5-quasi-clique. This is expected since all these laws refer to the legal name of a child, so words like 'child' or 'parent' are very common.

	\begin{figure}[h]
	\vspace*{-1.5mm}
	\centering
	\includegraphics[scale = 0.49] {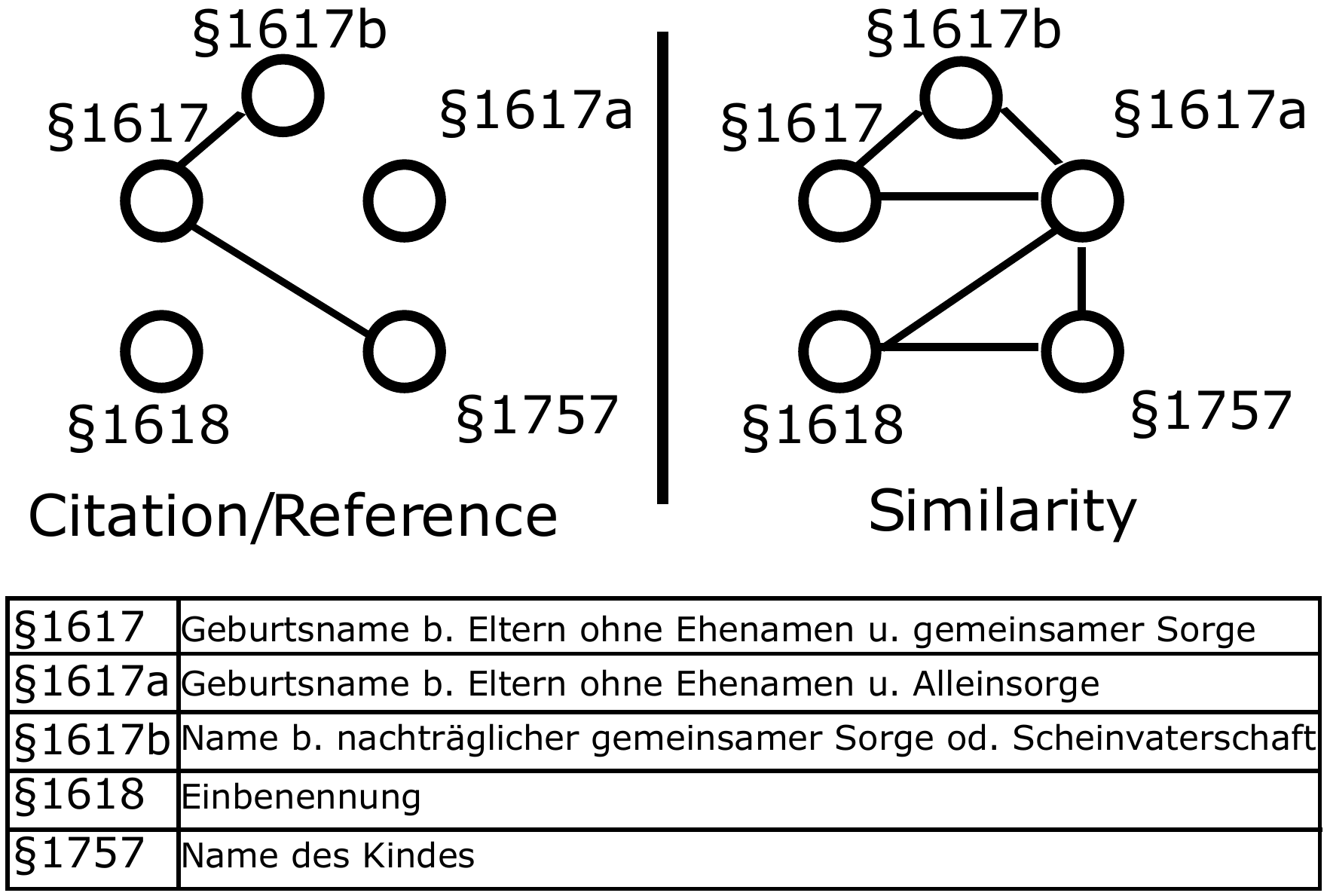}
	\vspace*{-1mm}
	\caption{A collection of laws with few citations/references, but having high textual similarity.}
	\label{fig:laws}
\end{figure}

Still, notice that $\S~1757$ and $\S~1617$ are textually dissimilar despite being connected in the Citation graph. This is because $\S~1757$ appears in the context of adoption while $\S~1617$ refers to the name of the child when one of the parents is not married to the biological parent of the child;~\ie~the citation is expected. The interestingness of this pattern is 2.
\subsection{Evaluation on real-world datasets: Overall performance}\label{exp:overall}

\textbf{Running times and quality of the patterns.} 
Table~\ref{tbl:results} summarizes the performance of our \software algorithm and contrasts it against the complement graph approach with MiMAG as the quasi-clique detector.

Regarding runtime, our approach outperforms MiMAG (except in the small friendship graph). As discussed in Section~\ref{sec:synthetic}, this result is related to the poor efficiency of MiMAG's pruning techniques on dense (complement) graphs. Note that, with the exception of the Arxiv experiment, \software visits more nodes in the enumeration tree which is expected since it considers more subtrees as potential \patterns and produces more of them.

Indeed, \software generates often patterns with slightly higher interestingness and -- despite its lower runtime -- is able to find more interesting patterns, which is clearly illustrated in the sum of interestingness of all the patterns found, sum($I(O)$). In some cases (\eg~in the DBLP experiment) this amounts almost twice the number of patterns reported by MiMAG. In contrast to \software, MiMAG is steered towards slightly more dense and large patterns since in almost all cases \software reported patterns with less density and smaller.

\subsection{Effectiveness of pruning} 

To test the effectiveness of our pruning and bounding techniques we have contrasted \software with and without these techniques. In Fig.~\ref{fig:runtime_prunVSnoprun} we report the runtime and number of nodes visited in the set enumeration tree. Due to the exponential complexity of \software without pruning strategies (see Sec.~\ref{sec:overallScheme}), we had to consider a smaller portion of the Arxiv, and the Facebook/Friendship datasets; the graphs have 4575 vertices/5837 edges, and 208 vertices/991 edges. To better illustrate the results we have plotted them using log~scale. 
\begin{figure}[h]
	\centering
	\includegraphics[clip, scale = 0.59] {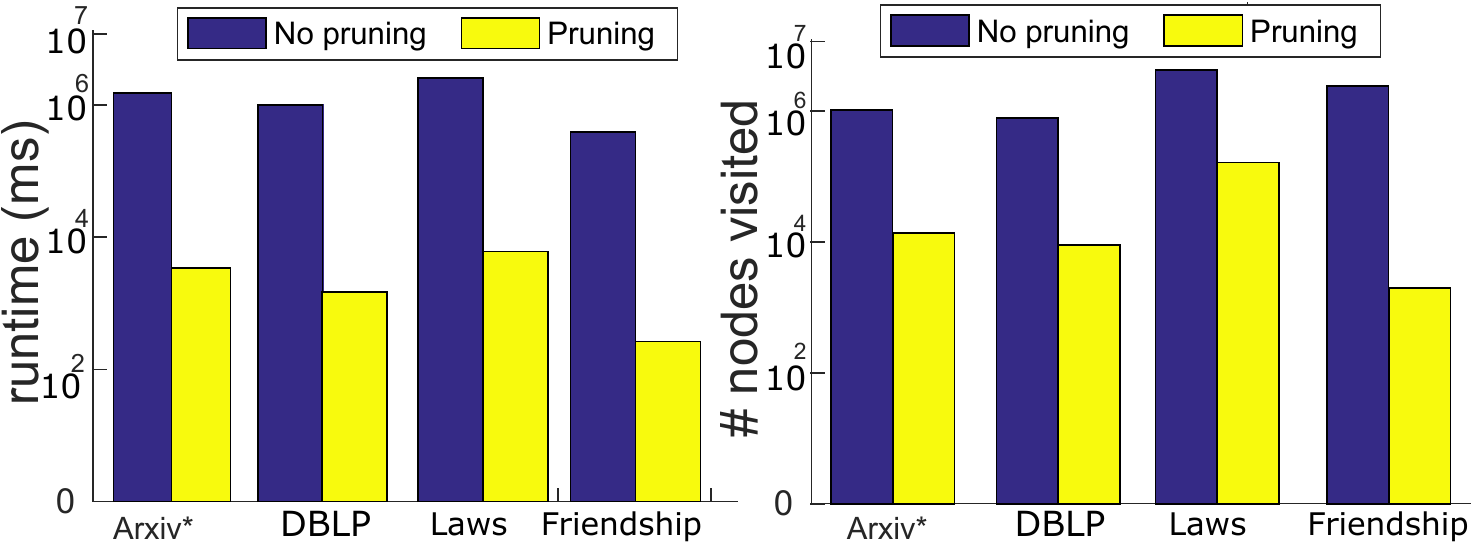}
	\vspace*{-1mm}
	\caption{Runtime and number of visits to nodes in the enumeration tree of \software with and without pruning techniques.}
	\vspace*{-1mm}
	\label{fig:runtime_prunVSnoprun}
\end{figure}

As shown in Fig.~\ref{fig:runtime_prunVSnoprun} (left) the runtime is dramatically affected without pruning. The Arxiv experiment took 443 times more time, the Conferences 676 times, the Law data experiment 420 times, and the Friendship experiment 1498 times. This can be explained by the  number of visits made to the enumeration tree as shown in Fig.~\ref{fig:runtime_prunVSnoprun} (right). For example, without pruning, the Arxiv experiment did 74 times more visits and the Conferences experiment made 87 times more visits. Interestingly, the law data produces the highest number of visits and runtime despite being the second smallest in terms of number of edges and vertices (\eg~to find all contrasting patterns, it takes 2567 seconds without pruning techniques and only 6 seconds with pruning). 
These results show that the pruning techniques highly improve the performance of \software.


Overall, as shown in these experiments, \software successfully detects contrasting quasi-clique patterns in a variety of datasets. The various examples indicate the usefulness of our novel pattern model and the potential of \software to efficiently find these patterns.

\section{Conclusion}
\label{sec:conclusions}
We proposed the new principle of {contrasting quasi-clique patterns}: subgraphs which are dense in one graph but sparse in another. Unlike existing works focusing on density only, our patterns highlight the difference/contrast between the subgraphs -- thus, opening the door for a novel way of knowledge extraction. We introduced a model aiming to find a set of non-redundant, most interesting \patterns. Based on this model, we proposed an efficient A$^*$-like algorithm using optimistic subtree estimates and pruning techniques. Using a variety of different real-world datasets we illustrated the efficiency of our method and how \patterns can be used to derive interesting insights from graphs.

While this paper introduced the first approach for finding \patterns, as future work we aim to derive even more scalable approaches, \eg, based on sampling, we plan to propose extensions to more complex multi-layer graphs, and to incorporate edge weights in the model.


%
%
\balance
\def\bibfont{\small}
\bibliographystyle{ACM-Reference-Format}
\bibliography{referenceQC}

\end{document}